%% file: main.tex
\documentclass{article} 
\usepackage{iclr2027_conference,times}

\input{math_commands.tex}

\usepackage{wrapfig}
\usepackage{booktabs}
\usepackage{xcolor}
\usepackage{colortbl} 
\usepackage[most]{tcolorbox}
\usepackage{multirow}
\usepackage{subcaption}
\usepackage{lmodern}
\usepackage{fancyvrb}
\usepackage{fvextra}
\usepackage{titlesec}
\usepackage{parskip}
\tcbuselibrary{listings,theorems,breakable}
\newtcbtheorem[number within=section]{exmp}{Prompts}%
{breakable,colback=white!5!white,colframe=black!95!,fonttitle=\bfseries, left=.02in, right=.02in,bottom=.02in, top=.02in}{exmp}

\newtcbtheorem[number within=section]{case}{Examples}%
{breakable,colback=white!5!white,colframe=black!95!,fonttitle=\bfseries, left=.02in, right=.02in,bottom=.02in, top=.02in}{case}

\usepackage[utf8]{inputenc} 
\usepackage[T1]{fontenc}    
\usepackage{hyperref}       
\usepackage{url}            
\usepackage{booktabs}       

\usepackage{siunitx}
\usepackage{nicefrac}       
\usepackage{microtype}      
\usepackage{xcolor}         
\definecolor{purple1}{HTML}{e5e0f0}

\usepackage[utf8]{inputenc} 
\usepackage[T1]{fontenc}    
\usepackage{amsfonts}       
\usepackage{hyperref}       
\usepackage{url}            
\usepackage{booktabs}       
\usepackage{amsfonts}       
\usepackage{nicefrac}       
\usepackage{microtype}      
\usepackage{xcolor}         
\usepackage{amsmath}
\usepackage{amssymb}
\usepackage{mathtools}
\usepackage{amsthm}
\usepackage{caption}
\usepackage{graphicx}
\usepackage{multirow}
\usepackage{enumitem}
\usepackage{color}
\usepackage{xcolor}
\usepackage{colortbl}
\usepackage{pifont}
\usepackage{algorithm}
\usepackage{algorithmic}
\usepackage{ulem}
\usepackage{booktabs}
\usepackage{bbding}
\usepackage{wrapfig}
\definecolor{DarkGreen}{RGB}{1,100,32} 
\usepackage[capitalize,noabbrev]{cleveref}

\theoremstyle{plain}
\newtheorem{theorem}{Theorem}[section]

\theoremstyle{definition}

\theoremstyle{remark}

\definecolor{titleblue}{HTML}{1F4E79}
\definecolor{softbg}{HTML}{F7FAFC}
\definecolor{lineblue}{HTML}{90CAF9}
\definecolor{taggray}{HTML}{4A5568}

\newtcolorbox{promptbox}[1][]{
  enhanced,
  breakable,
  colback=softbg,
  colframe=lineblue,
  boxrule=0.6pt,
  arc=2mm,
  left=2mm,
  right=2mm,
  top=1mm,
  bottom=1mm,
  #1
}

\DefineVerbatimEnvironment{PromptVerbatim}{Verbatim}{
  fontsize=\small,
  breaklines=true,
  breakanywhere=true
}

\title{PhyMo: A Physical-Field Modality for Multimodal AI4Physics}

\author{
\textbf{Henan Sun\textsuperscript{1}},
 \textbf{Haitao Hu\textsuperscript{1}},
 \textbf{Jin Liu\textsuperscript{3}},
 \textbf{Jianfeng Zhang\textsuperscript{3}},
 \textbf{Lujia Pan\textsuperscript{3}},
 \textbf{Nuo Chen\textsuperscript{1,4}},
 \textbf{Jia Li \textsuperscript{1,2,*}},
\\
 \textsuperscript{1}The Hong Kong University of Science and Technology (Guangzhou),\\
 \textsuperscript{2}The Hong Kong University of Science and Technology,\\
 \textsuperscript{3}Huawei Noah's Ark Lab,\\
 \textsuperscript{4}Tencent HY
}

\begin{document}

\iclrfinalcopy
\maketitle
\fancyhead{}

\begin{abstract}
Multimodal learning is emerging as a powerful paradigm for AI for Physics (AI4Physics), where predicting physical systems requires the joint interpretation of heterogeneous observations, measurements, and domain knowledge. However, existing approaches typically represent physical quantities and governing equations as generic numerical or textual tokens, overlooking the physical constraints that determine their spatiotemporal interactions. To address this limitation, we introduce the \textbf{physical-field modality} and propose \textbf{PhyMo}, a physics-grounded multimodal framework that organizes heterogeneous measurements through PDE-associated operators. PhyMo follows a three-stage learning procedure: the physical-field encoder is first pretrained through field reconstruction under PDE residual supervision, its representations are subsequently aligned with visual embeddings in a shared latent space, and the fused multimodal representations are finally processed by corresponding downstream prediction heads. Experiments on five datasets spanning diverse physical environments show that PhyMo achieves state-of-the-art performance, compared to the strongest baseline on each dataset, demonstrating the superiority of PhyMo on multimodal representation learning in AI4Physics.

\end{abstract}

\section{Introduction}
\label{sec: intro}

Artificial intelligence (AI), and multimodal learning in particular, has emerged as a powerful paradigm for addressing complex scientific and societal challenges that require the integration of heterogeneous information sources~(\cite{yang2026unified}). This capability is especially valuable for AI for Physics (AI4Physics), where understanding and predicting physical-world systems often requires jointly modeling multimodal observations, measurements, and domain-specific physical knowledge~(\cite{abbas2026physics,yang2026unified}).
Recent advances in multimodal foundation models have demonstrated the potential of combining heterogeneous modalities, such as images, text, and numerical measurements, to address increasingly complex AI4Physics problems~(\cite{huang2025multimodal,tang2026multimodal,zou2026intern}). A common paradigm is to encode different sources of scientific information into a shared representation space, enabling a unified model to reason over observational data and domain knowledge. Thus, visual observations can provide rich spatial information, while textual representations can conveniently encode physical laws, mathematical formulas, scientific parameters, and numerical measurements~(\cite{zhu2026foundations,negrini2025multimodal}). Consequently, recent AI4Physics systems implicitly treat physical equations, parameters, and scientific data as textual or tokenized sequences, thereby enabling existing multimodal architectures to process heterogeneous scientific information.

However, directly representing physical information as text or generic tokens overlooks an essential property of physical data: physical quantities are not merely information symbols, but variables governed by physical laws. Treating them as ordinary text tokens may preserve their semantic identity while discarding the mathematical constraints that determine how they interact~(\cite{shen2025position}). Taking the scenario of solar photovoltaic (PV) power forecasting as example, the data of wind speed is not simply an independent numerical feature; its spatial and temporal evolution is associated with transport and advection processes that can be described by advection-type partial differential equations (PDEs). Similarly, the propagation and spatial variation of solar irradiance are closely related to physical transport and diffusion processes, which can be characterized through diffusion-type PDEs. If wind speed, solar irradiance, and other physical measurements are directly serialized as text tokens and fed into a multimodal model, the model is not explicitly informed that these quantities should satisfy the corresponding physical relationships. As a result, the model may learn statistical correlations from the observed data while failing to adequately exploit the governing physical constraints~(\cite{nie2023skipp}).

Motivated by this limitation, we introduce a new modality: \textbf{physical-field modality}, and the corresponding model \textbf{PhyMo}, which explicitly represents physical measurements according to the governing structures of physical systems rather than treating them as generic tokens. The key observation is that a broad class of physical laws can be formulated in terms of partial differential equations, providing a unified mathematical language for describing the evolution and interaction of physical fields. Based on this observation, we organize physical measurements into a set of PDEs that explicitly characterize their temporal and spatial dynamics, and introduce physics-based constraints through PDE residual losses during model training. The resulting physical-field modality enables PhyMo to learn representations that are simultaneously grounded in observational information and constrained by the underlying physical dynamics. Importantly, our formulation separates the representation of physical fields from the visual modality while retaining a unified multimodal learning framework, making the physical encoder naturally extensible to different physical systems and governing equations. This design therefore provides not only a principled mechanism for injecting physical knowledge into multimodal models, but also a scalable pathway toward incorporating diverse physical fields and PDEs into foundation-model-style architectures. By jointly exploiting visual observations and physics-grounded representations, our approach aims to learn predictive models that are both statistically accurate and physically consistent.

\textbf{Our Contributions.}
(1) \textit{\underline{New Physical-Field Modality.}}
To the best of our knowledge, this work is the first to formulate physical fields as a dedicated modality for multimodal AI4Physics, rather than treating physical measurements as generic numerical or textual inputs. By organizing heterogeneous physical quantities through PDE-associated operators, our formulation provides a unified representation that explicitly preserves the spatiotemporal structure and governing relationships of physical systems.
(2) \textit{\underline{Physics-Grounded Multimodal Framework.}}
We propose PhyMo, a three-stage framework consisting of physical-field encoder pretraining, multimodal alignment, and downstream prediction-head training. The physical-field encoder is pretrained through field reconstruction with PDE residual supervision to capture both physical states and governing dynamics. The learned physical embeddings are then aligned with visual representations in a shared latent space, and their fused representations are finally used by prediction heads for downstream prediction.
(3) \textit{\underline{Theoretical and Empirical Effectiveness.}}
We provide rigorous theoretical analyses showing the advantage of complementary multimodal information, physics-constrained representation learning, and PDE-structured physical inputs over conventional numerical or text representations. Extensive experiments further demonstrate that the proposed PhyMo consistently enhances multimodal representation quality and improves downstream forecasting performance, validating its effectiveness as a general representation paradigm for AI4Physics.

\section{Related Work}
\label{sec: related_work}

Artificial intelligence has demonstrated strong potential for modeling complex physical-world systems. 
Pangu-Weather~(\cite{bi2023accurate}) employs a 3D Earth-specific Transformer and hierarchical temporal aggregation to model global atmospheric evolution across multiple pressure levels. 
GraphCast~(\cite{lam2023learning}) represents the global atmosphere as a multi-scale graph and performs autoregressive message passing for medium-range weather forecasting. 
FuXi~(\cite{chen2023fuxi}) adopts a cascaded forecasting architecture with specialized models for different lead-time ranges to mitigate error accumulation in long-range prediction. 
ClimaX~(\cite{nguyen2023climax}) introduces a Transformer-based foundation model with variable-specific tokenization and aggregation for transferable weather and climate modeling. 
NeuralGCM~(\cite{kochkov2024neural}) combines a differentiable atmospheric dynamical core with neural parameterizations of unresolved physical processes for weather and climate simulation. 
GenCast~(\cite{price2025probabilistic}) formulates global probabilistic weather forecasting as conditional diffusion over atmospheric states to generate ensemble trajectories. 
Aurora~(\cite{bodnar2025foundation}) employs Perceiver-based encoders and decoders with a 3D Swin Transformer to learn transferable representations across diverse Earth-system forecasting tasks. 
Despite their strong predictive capability, these approaches predominantly operate on a single structured physical-data representation and do not explicitly exploit complementary modalities, limiting their ability to learn richer cross-modal representations of physical systems.

More recently, multimodal learning has been introduced to physical-science problems by jointly modeling complementary observations and scientific knowledge.
AstroCLIP~(\cite{parker2024astroclip}) independently pretrains Transformer-based encoders for galaxy images and optical spectra, and subsequently aligns the two modalities through contrastive learning to obtain a shared representation for redshift estimation, physical-property prediction, and morphology classification.
Maven~(\cite{zhang2024maven}) uses modality-specific Transformer encoders and contrastive learning to align supernova photometry and spectroscopy within a shared representation space. 
MultiMat~(\cite{moro2025multimodal}) jointly aligns crystal structures, density of states, charge-density fields, and textual descriptions through self-supervised multimodal contrastive learning for material-property prediction and discovery. 
AION-1~(\cite{parker2026aion}) unifies heterogeneous astronomical observations, including multiband images, spectra, and scalar measurements, through modality-specific tokenization followed by Transformer-based multimodal masked modeling for a broad range of downstream astrophysical tasks.
RadarQA~(\cite{he2026radarqa}) adapts multimodal large language models to weather-forecast analysis by jointly processing radar observations, forecast sequences, physical attributes, and natural-language assessment reports.
LLaMA-Vision for Neutrino Classification~(\cite{sagar2026adapting}) fine-tunes LLaMA~3.2 Vision with QLoRA to jointly process detector pixel maps and physics-informed textual prompts, enabling multimodal classification and explanation of neutrino interaction events in high-energy physics.
While these methods demonstrate the benefit of multimodal fusion, physical information is generally represented as observational arrays, symbolic or textual descriptions. However, the governing physical field itself is not explicitly formulated as a dedicated modality whose representation learning is constrained by its physical laws. 
Consequently, cross-modal representations are primarily learned from statistical correspondence between modalities rather than explicit governing dynamics, motivating our PDE-structured physical-field modality with physics-constrained representation learning.

\section{Methodology}
\label{sec: methodology}

\subsection{Motivation}
\label{sec: Motivation}
As discussed in Section~\ref{sec: intro}, existing multimodal paradigms for AI4Physics predominantly integrate scientific information through conventional modalities, such as images and text. Although such formulations provide a convenient interface for incorporating heterogeneous data, they often treat physical measurements and equations as generic numerical or textual tokens, without explicitly preserving the physical constraints governing their spatiotemporal evolution. This motivates us to introduce a dedicated \textbf{physical-field modality}. Ideally, such a modality should satisfy two fundamental requirements. First, it should provide a general and convenient representation for heterogeneous physical measurements, allowing different physical quantities to be systematically mapped into a common representation space. Second, the learned embeddings should be explicitly grounded in the governing physical laws, such that the representation captures not only statistical correlations among observations but also the underlying physical dynamics. In other words, we seek a modality-specific representation $\mathbf{z}_{\mathrm{phy}}$ whose encoder \begin{equation} f_{\mathrm{phy}}: \mathcal{X}_{\mathrm{phy}} \rightarrow \mathcal{Z}_{\mathrm{phy}} \end{equation} maps heterogeneous physical observations $\mathcal{X}_{\mathrm{phy}}$ into a latent space $\mathcal{Z}_{\mathrm{phy}}$ while preserving the physical constraints associated with the corresponding fields. 

In machine learning, a modality is not only characterized by the raw form of its data, but by the joint design of its data organization, model architecture, training paradigm, and learning objective~(\cite{liang2024foundations,xu2023multimodal,zong2024self}). For example, language models organize text as discrete tokens, process them with Transformer-based architectures, and learn representations through token-prediction objectives such as next-token prediction. Note that physical measurements are fundamentally governed by physical laws, which can be expressed as partial differential equations (PDEs). Thus, PDEs provide a natural abstraction for organizing heterogeneous physical data. Accordingly, instead of treating physical measurements, such as temperature, radiation, wind and humidity, as isolated scalar inputs, we organize them as corresponding PDEs and enforce physical field reconstruction with PDE residual losses. Reconstruction preserves information about the physical state, while PDE-based supervision imposes physics-aware inductive bias on the learned representation. Under this design, physical quantities and PDE operators define the data structure, the encoder maps them into a latent space, field reconstruction defines the learning paradigm, and PDE residual loss provides the physics-grounded objective. The resulting embeddings therefore capture not only the observed physical data, but also the governing structure of the underlying physical system.

\subsection{The Framework of PhyMo}
\label{sec: framework}
Our framework of PhyMo is designed to integrate visual observations with the proposed physical-field modality for multimodal forecasting. As illustrated in Fig.~\ref{fig:framework}, the overall training procedure consists of three stages: 
\textbf{(1) Physical-field Encoder Pretraining}, 
\textbf{(2) Multimodal Alignment}, and 
\textbf{(3) Prediction Head Training}.
The first stage learns physics-aware representations from structured physical fields, the second stage aligns physical and visual representations into a compatible latent space, and the third stage performs target prediction based on the aligned multimodal embeddings.

\begin{figure*}[t]
    \centering
    \includegraphics[width=\textwidth]{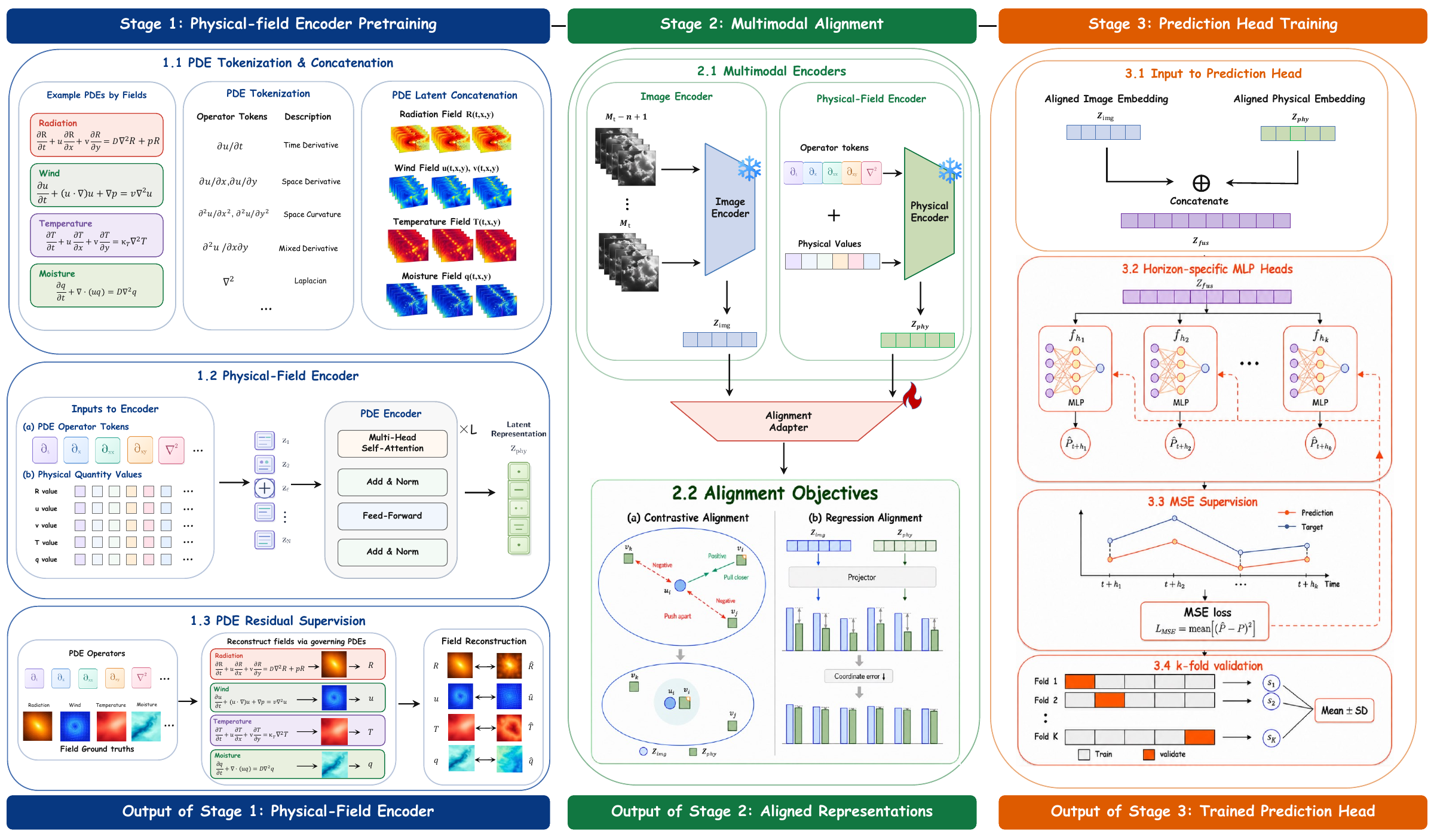}
    \caption{
    Overview of the proposed PhyMo.
    The framework consists of three stages:
    \textbf{Stage 1}, physical-field encoder pretraining with PDE-based tokenization and residual supervision;
    \textbf{Stage 2}, multimodal alignment between visual and physical-field representations;
    and \textbf{Stage 3}, prediction head training for downstream tasks.
    }
    \label{fig:framework}
\end{figure*}

\paragraph{Stage 1: Physical-field Encoder Pretraining.} 
Given multiple physical fields, including radiation, wind, temperature, and moisture, we describe their spatiotemporal dynamics through the corresponding governing PDEs. 
Each physical field is represented by two complementary components: 
\textit{PDE operator tokens}, which characterize the differential structure of the governing equations, and 
\textit{physical quantity values}, which provide the numerical state of the physical system.

Specifically, the PDEs are decomposed into a collection of operators, such as temporal derivatives, spatial derivatives, mixed derivatives, and Laplacian operators. 
These operator tokens are combined with the corresponding physical measurements and fed into a Transformer-style physical-field encoder. 
For a physical input $\mathbf{x}_{\mathrm{phy}}$, the encoder produces a latent representation
\begin{equation}
    \mathbf{z}_{\mathrm{phy}}
    =
    E_{\mathrm{phy}}(\mathbf{x}_{\mathrm{phy}}),
\end{equation}
where $E_{\mathrm{phy}}$ denotes the physical-field encoder.

To ensure that the learned representation preserves the underlying physical structure, we train the encoder through physical-field reconstruction under PDE supervision. 
The latent representation is used to reconstruct the corresponding physical fields, while the reconstructed fields are constrained by their governing PDEs. 
Let $\hat{\mathbf{X}}_{\mathrm{phy}}$ denote the reconstructed fields and $\mathcal{F}_m(\cdot)$ denote the governing PDE for the $m$-th physical field. 
The PDE residual supervision can be generally written as
\begin{equation}
    \mathcal{L}_{\mathrm{PDE}}
    =
    \sum_{m=1}^{M}
    \left\|
        \mathcal{F}_m
        \left(
            \hat{\mathbf{X}}_{\mathrm{phy}}^{(m)}
        \right)
    \right\|_2^2.
\end{equation}

\paragraph{Stage 2: Multimodal Alignment.}
After pretraining the physical-field encoder, we align its representation space with the visual representation space. 
Given a sequence of historical sky images
$
    \mathcal{I}_t
    =
    \left\{
        I_{t-n+1}, \ldots, I_t
    \right\},
$
an image encoder $E_{\mathrm{img}}$ (e.g., pretrained GNNs or ViTs) extracts the corresponding visual embedding
$
    \mathbf{z}_{\mathrm{img}}
    =
    E_{\mathrm{img}}(\mathcal{I}_t).
$
In parallel, the pretrained physical-field encoder produces
$
    \mathbf{z}_{\mathrm{phy}}
    =
    E_{\mathrm{phy}}(\mathbf{x}_{\mathrm{phy}}).
$
Since the two encoders are trained from fundamentally different modalities, their latent spaces are not necessarily directly compatible. 
We therefore introduce an alignment adapter to project the visual and physical representations into a shared latent space. 
Denoting the projected representations as
$\tilde{\mathbf{z}}_{\mathrm{img}}$ and
$\tilde{\mathbf{z}}_{\mathrm{phy}}$, the alignment stage optimizes two complementary objectives.

First, a contrastive objective encourages representations from matched image--physical-field pairs to remain close while separating mismatched pairs:
\begin{equation}
    \mathcal{L}_{\mathrm{con}}
    =
    - \frac{1}{B}
    \sum_{i=1}^{B}
    \log
    \frac{
        \exp
        \left(
            \operatorname{sim}
            (
                \tilde{\mathbf{z}}_{\mathrm{img}}^{i},
                \tilde{\mathbf{z}}_{\mathrm{phy}}^{i}
            )/\tau
        \right)
    }{
        \sum_{j=1}^{B}
        \exp
        \left(
            \operatorname{sim}
            (
                \tilde{\mathbf{z}}_{\mathrm{img}}^{i},
                \tilde{\mathbf{z}}_{\mathrm{phy}}^{j}
            )/\tau
        \right)
    },
\end{equation}
where $B$ denotes the batch size, $\operatorname{sim}(\cdot,\cdot)$ is a similarity function, and $\tau$ is the temperature parameter.
Second, a regression-based alignment objective further reduces the discrepancy between paired representations:
\begin{equation}
    \mathcal{L}_{\mathrm{reg}}
    =
    \frac{1}{B}
    \sum_{i=1}^{B}
    \left\|
        \tilde{\mathbf{z}}_{\mathrm{img}}^{i}
        -
        \tilde{\mathbf{z}}_{\mathrm{phy}}^{i}
    \right\|_2^2.
\end{equation}
The overall multimodal alignment objective is therefore
\begin{equation}
    \mathcal{L}_{\mathrm{align}}
    =
    \mathcal{L}_{\mathrm{con}}
    +
    \lambda_{\mathrm{reg}}
    \mathcal{L}_{\mathrm{reg}},
\end{equation}
where $\lambda_{\mathrm{reg}}$ balances the two objectives.

\paragraph{Stage 3: Prediction Head Training.}
In the final stage, we use the aligned visual and physical representations for downstream tasks. 
The two modality embeddings are concatenated as
$
    \mathbf{z}_{\mathrm{fus}}
    =
    \tilde{\mathbf{z}}_{\mathrm{img}}
    \oplus
    \tilde{\mathbf{z}}_{\mathrm{phy}},
$
where $\oplus$ denotes feature concatenation.
Rather than using a single prediction function for all forecasting horizons, we employ a set of horizon-specific MLP heads
$
    \left\{
        f_{h_1},
        f_{h_2},
        \ldots,
        f_{h_K}
    \right\},
$
where each $f_{h_k}$ is responsible for prediction at a specific future horizon $h_k$. 
The prediction at horizon $h_k$ is given by
$
    \hat{P}_{t+h_k}
    =
    f_{h_k}
    \left(
        \mathbf{z}_{\mathrm{fus}}
    \right).
$
Then, the prediction heads are optimized using mean squared error between the predicted and ground-truth:
\begin{equation}
    \mathcal{L}_{\mathrm{pred}}
    =
    \frac{1}{K}
    \sum_{k=1}^{K}
    \left(
        \hat{P}_{t+h_k}
        -
        P_{t+h_k}
    \right)^2.
\end{equation}

Furthermore, we perform $k$-fold validation during prediction-head training to evaluate model stability and select the prediction model. 
The validation results across different folds are aggregated to obtain the final performance estimate. 
Through this three-stage training procedure, the proposed framework progressively learns physics-informed representations and establishes a semantically consistent cross-modal alignment between physical fields and visual observations.

\subsection{Theoretical Analysis}
\label{sec: theo_analysis}

We provide a theoretical analysis from three complementary perspectives to characterize the role of multimodal fusion, physics-constrained representation learning, and PDE-structured physical information. Detailed assumptions and proofs are deferred to Appendix~\ref{app:theory}.

We first establish that jointly exploiting visual and physical information is theoretically preferable to relying on either modality alone. Let $\mathbf{Z}_{\mathrm{img}}$ and $\mathbf{Z}_{\mathrm{phy}}$ denote the image and physical-field representations, respectively, and let $Y$ denote the target.

\begin{theorem}[Multimodal Bayes-Risk Dominance]
\label{thm:multimodal}
Let
\[
\mathcal{R}^{*}(\mathbf{Z})
=
\inf_f
\mathbb{E}
\left[
\left(
Y-f(\mathbf{Z})
\right)^2
\right]
\]
denote the minimum achievable prediction risk given representation $\mathbf{Z}$. Then
\begin{equation}
\mathcal{R}^{*}
\left(
\mathbf{Z}_{\mathrm{img}},
\mathbf{Z}_{\mathrm{phy}}
\right)
\leq
\min
\left\{
\mathcal{R}^{*}
\left(
\mathbf{Z}_{\mathrm{img}}
\right),
\mathcal{R}^{*}
\left(
\mathbf{Z}_{\mathrm{phy}}
\right)
\right\}.
\label{eq:multimodal_risk}
\end{equation}
\end{theorem}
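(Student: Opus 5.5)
The argument is a monotonicity one: enlarging the information available to the predictor can only enlarge the class of admissible functions, so the infimum can only decrease. I will work under the standing assumptions implicit in the statement. These are that $Y$ is square-integrable, $\mathbb{E}[Y^2]<\infty$, so all risks are finite, and that the infimum in $\mathcal{R}^{*}(\mathbf{Z})$ ranges over all measurable functions $f$ of the indicated argument with $f(\mathbf{Z})$ square-integrable.

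\textbf{Step 1 (embedding of function classes).} Take any measurable $g$ acting on $\mathbf{Z}_{\mathrm{img}}$ alone. Define $f(\mathbf{z}_{\mathrm{img}},\mathbf{z}_{\mathrm{phy}}) := g(\mathbf{z}_{\mathrm{img}})$, i.e.\ $g$ composed with the coordinate projection $\pi_{\mathrm{img}}$. Projection is measurable with respect to the product $\sigma$-algebra, so $f$ is an admissible joint predictor. Moreover $f(\mathbf{Z}_{\mathrm{img}},\mathbf{Z}_{\mathrm{phy}}) = g(\mathbf{Z}_{\mathrm{img}})$ pointwise, which gives
\[
\mathcal{R}^{*}(\mathbf{Z}_{\mathrm{img}},\mathbf{Z}_{\mathrm{phy}})
\le
\mathbb{E}\left[\left(Y-g(\mathbf{Z}_{\mathrm{img}})\right)^2\right].
\]

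\textbf{Step 2 (pass to the infimum and symmetrize).} Step 1 holds for every admissible $g$, so taking the infimum over $g$ yields $\mathcal{R}^{*}(\mathbf{Z}_{\mathrm{img}},\mathbf{Z}_{\mathrm{phy}})\le \mathcal{R}^{*}(\mathbf{Z}_{\mathrm{img}})$. The identical argument with $\pi_{\mathrm{phy}}$ gives $\mathcal{R}^{*}(\mathbf{Z}_{\mathrm{img}},\mathbf{Z}_{\mathrm{phy}})\le \mathcal{R}^{*}(\mathbf{Z}_{\mathrm{phy}})$. A quantity bounded by two numbers is bounded by their minimum, which is \eqref{eq:multimodal_risk}.

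\textbf{A cleaner alternative.} For readers who prefer a probabilistic form, I would also record $\mathcal{R}^{*}(\mathbf{Z}) = \mathbb{E}[\operatorname{Var}(Y\mid\mathbf{Z})]$, since the conditional mean is the $L^2$-optimal predictor. Then $\sigma(\mathbf{Z}_{\mathrm{img}})\subseteq\sigma(\mathbf{Z}_{\mathrm{img}},\mathbf{Z}_{\mathrm{phy}})$. By the tower property and the fact that $L^2$ projection onto a larger closed subspace has smaller residual, the conditional variance can only decrease in expectation.

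\textbf{Expected obstacle.} The only delicate point is measure-theoretic bookkeeping: making explicit the class of $f$ over which the infimum is taken and checking that it is closed under composition with projections. With that class fixed, the result is immediate, and the proof need not show that the infimum is attained. I will also remark that the inequality can be strict only when each modality carries information about $Y$ not already captured by the other; the theorem itself asserts only the weak inequality.
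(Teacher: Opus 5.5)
Your argument is correct, but your main route differs from the paper's. The paper does not compare function classes directly. It uses the identity $\mathcal{R}^{*}(\mathbf{Z})=\mathbb{E}[\operatorname{Var}(Y\mid\mathbf{Z})]$ together with the conditional law of total variance
\[
\operatorname{Var}(Y\mid\mathbf{Z}_{\mathrm{img}})=\mathbb{E}[\operatorname{Var}(Y\mid\mathbf{Z}_{\mathrm{img}},\mathbf{Z}_{\mathrm{phy}})\mid\mathbf{Z}_{\mathrm{img}}]+\operatorname{Var}(\mathbb{E}[Y\mid\mathbf{Z}_{\mathrm{img}},\mathbf{Z}_{\mathrm{phy}}]\mid\mathbf{Z}_{\mathrm{img}}),
\]
drops the nonnegative second term, takes expectations, and symmetrizes. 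This is essentially your ``cleaner alternative'' written out.

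Your Steps~1--2 rely only on the fact that $g\circ\pi_{\mathrm{img}}$ is an admissible joint predictor. That makes the argument more economical and more general. It needs no conditional expectations and no Bayes-predictor identity. It does not need $\mathbb{E}[Y^2]<\infty$, which the paper's identity presupposes but never states, nor attainment of the infimum. It also works verbatim for any loss function, not just squared loss.

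The paper's route gives a quantitative statement instead. The decomposition yields the exact gain $\mathcal{R}^{*}(\mathbf{Z}_{\mathrm{img}})-\mathcal{R}^{*}(\mathbf{Z}_{\mathrm{img}},\mathbf{Z}_{\mathrm{phy}})=\mathbb{E}[\operatorname{Var}(\mathbb{E}[Y\mid\mathbf{Z}_{\mathrm{img}},\mathbf{Z}_{\mathrm{phy}}]\mid\mathbf{Z}_{\mathrm{img}})]$, and symmetrically for $\mathbf{Z}_{\mathrm{phy}}$. So the inequality against the minimum is strict exactly when both gains are positive. That turns your informal closing remark about strictness into a precise criterion, which the inclusion argument alone cannot supply.
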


Theorem~\ref{thm:multimodal} establishes an information-theoretic motivation: incorporating complementary physical information cannot increase the Bayes-optimal risk relative to either image-only or physics-only prediction, and can strictly reduce it when the two modalities contain complementary predictive information.

We next analyze whether the PDE supervision in Stage~1 can improve the quality of the learned physical representation. Let
$\mathbf{X}_{\mathrm{phy}}^{*}$ denote the underlying physical fields and define the physically admissible set as
$
\mathcal{S}
=
\left\{
\mathbf{X}:
\mathcal{F}_m
\left(
\mathbf{X}^{(m)}
\right)=0,
\ \forall m=1,\ldots,M
\right\}.
$
Stage~1 optimizes the reconstruction objective together with
$
\mathcal{L}_{\mathrm{PDE}}
=
\sum_{m=1}^{M}
\left\|
\mathcal{F}_m
\left(
\hat{\mathbf{X}}_{\mathrm{phy}}^{(m)}
\right)
\right\|_2^2.
$

\begin{theorem}[Reconstruction-Risk Advantage of Physics-Constrained Representations]
\label{thm:physics_reconstruction}
Assume that the governing PDEs are correctly specified such that
$\mathbf{X}_{\mathrm{phy}}^{*}\in\mathcal{S}$,
and that $\mathcal{S}$ is a nonempty closed convex set.
For an arbitrary unconstrained reconstruction
$\tilde{\mathbf{X}}_{\mathrm{phy}}$,
let the physics-constrained reconstruction be
$
\hat{\mathbf{X}}_{\mathrm{phy}}^{*}
=
\Pi_{\mathcal{S}}
\left(
\tilde{\mathbf{X}}_{\mathrm{phy}}
\right),
$
where $\Pi_{\mathcal{S}}$ denotes the Euclidean projection onto
$\mathcal{S}$.
Then
\begin{equation}
\mathbb{E}
\left[
\mathcal{L}_{\mathrm{rec}}
\left(
\mathbf{Z}_{\mathrm{phy}}^{*}
\right)
\right]
\leq
\mathbb{E}
\left[
\mathcal{L}_{\mathrm{rec}}
\left(
\mathbf{Z}
\right)
\right],
\label{eq:expected_reconstruction_dominance}
\end{equation}
for the corresponding unconstrained representation $\mathbf{Z}$.
\end{theorem}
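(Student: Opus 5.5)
The plan is to reduce the statement to the non-expansiveness of the Euclidean projection onto a closed convex set, applied pointwise and then integrated. First I fix notation. I interpret $\mathcal{L}_{\mathrm{rec}}(\mathbf{Z})$ as the squared reconstruction error $\|\tilde{\mathbf{X}}_{\mathrm{phy}}-\mathbf{X}_{\mathrm{phy}}^{*}\|_2^2$ of the decoder output associated with the unconstrained representation $\mathbf{Z}$. Likewise, $\mathcal{L}_{\mathrm{rec}}(\mathbf{Z}_{\mathrm{phy}}^{*})=\|\hat{\mathbf{X}}_{\mathrm{phy}}^{*}-\mathbf{X}_{\mathrm{phy}}^{*}\|_2^2$ is the error of the physics-constrained reconstruction $\hat{\mathbf{X}}_{\mathrm{phy}}^{*}=\Pi_{\mathcal{S}}(\tilde{\mathbf{X}}_{\mathrm{phy}})$. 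Both quantities are random through the data, and the expectation is over that randomness. I will state this identification explicitly, since the theorem phrases the loss in terms of representations while the hypothesis concerns reconstructions.

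The key pointwise step uses the variational characterization of $\Pi_{\mathcal{S}}$. For nonempty closed convex $\mathcal{S}$ and any $\mathbf{X}$, the point $p=\Pi_{\mathcal{S}}(\mathbf{X})$ exists, is unique, and satisfies $\langle \mathbf{X}-p,\, s-p\rangle\le 0$ for all $s\in\mathcal{S}$. Take $s=\mathbf{X}_{\mathrm{phy}}^{*}$, which is allowed because the PDEs are correctly specified and so $\mathbf{X}_{\mathrm{phy}}^{*}\in\mathcal{S}$. Expanding $\|\mathbf{X}-s\|^2=\|\mathbf{X}-p\|^2+\|p-s\|^2+2\langle \mathbf{X}-p,\,p-s\rangle$ gives
\[
\|\tilde{\mathbf{X}}_{\mathrm{phy}}-\mathbf{X}_{\mathrm{phy}}^{*}\|_2^2\ \ge\ \|\hat{\mathbf{X}}_{\mathrm{phy}}^{*}-\mathbf{X}_{\mathrm{phy}}^{*}\|_2^2+\|\tilde{\mathbf{X}}_{\mathrm{phy}}-\hat{\mathbf{X}}_{\mathrm{phy}}^{*}\|_2^2 .
\]
This is a Pythagorean inequality, and it is stronger than what the theorem needs. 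It also shows the inequality is strict whenever $\tilde{\mathbf{X}}_{\mathrm{phy}}\notin\mathcal{S}$. Equivalently, one can cite $\Pi_{\mathcal{S}}(\mathbf{X}_{\mathrm{phy}}^{*})=\mathbf{X}_{\mathrm{phy}}^{*}$ together with the $1$-Lipschitz property of $\Pi_{\mathcal{S}}$.

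Finally, I take expectations of both sides. The inequality holds for every realization, so monotonicity of expectation gives the claim. The only technical point is measurability of $\Pi_{\mathcal{S}}(\tilde{\mathbf{X}}_{\mathrm{phy}})$, which holds because the projection is continuous. If the right-hand expectation is infinite, the claim is trivial.

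I expect the main obstacle to be the modeling step rather than the inequality. I must make precise that the representation $\mathbf{Z}_{\mathrm{phy}}^{*}$ is one whose reconstruction is exactly $\Pi_{\mathcal{S}}(\tilde{\mathbf{X}}_{\mathrm{phy}})$, and that this is what the PDE-residual objective targets. I would state this as the defining assumption, namely that the decoder's range is rich enough to realize the projected field. I would also add a remark on the idealization: it treats $\mathcal{S}$ as convex, for example as the kernel of linear PDE operators, and identifies exact minimization of the PDE penalty with projection.
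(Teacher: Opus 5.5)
Your proposal is correct and follows essentially the same route as the paper. Both use the variational inequality for $\Pi_{\mathcal{S}}$ with $\mathbf{s}=\mathbf{X}_{\mathrm{phy}}^{*}$, expand the squared norm into the Pythagorean-type inequality, drop the nonnegative term, and take expectations of the pointwise bound, with $\mathbf{Z}_{\mathrm{phy}}^{*}$ defined exactly as the paper does, as a representation whose decoded field equals $\Pi_{\mathcal{S}}(\tilde{\mathbf{X}}_{\mathrm{phy}})$. Your extra remarks are sound refinements that the paper leaves implicit: measurability via continuity of the projection, strictness when $\tilde{\mathbf{X}}_{\mathrm{phy}}\notin\mathcal{S}$, and the idealization that identifies PDE-penalty minimization with projection.
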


Finally, we distinguish the proposed physical-field modality from directly feeding physical measurements as ordinary numerical features.

\begin{theorem}[Theoretical Advantage of the Physical-Field Modality over Numerical-only Inputs]
\label{thm:pde_structure}
Let $\mathbf{X}$ denote the observed physical quantities and let $\mathcal{D}(\mathbf{X})$ collect the PDE-associated information used by the proposed modality in Stage~1. Then
\begin{equation}
\mathcal{R}^{*}
\left(
\mathbf{X},
\mathcal{D}(\mathbf{X})
\right)
\leq
\mathcal{R}^{*}
\left(
\mathbf{X}
\right).
\label{eq:pde_vs_numeric}
\end{equation}
\end{theorem}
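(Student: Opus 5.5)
The plan is to reduce Theorem~\ref{thm:pde_structure} to the monotonicity of Bayes risk under enlargement of the conditioning information. This is the same mechanism that underlies Theorem~\ref{thm:multimodal}. Write $\mathbf{W}=(\mathbf{X},\mathcal{D}(\mathbf{X}))$. The key observation is that $\mathbf{X}$ is itself a measurable function of $\mathbf{W}$, namely the coordinate projection $\pi(\mathbf{W})=\mathbf{X}$. Hence every predictor $f(\mathbf{X})$ that uses numerical inputs only can be rewritten as a predictor $g(\mathbf{W}) := f(\pi(\mathbf{W}))$ that uses the augmented input.

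The argument then proceeds as follows. Let $\mathcal{G}_{\mathbf{X}}$ be the class of measurable functions of $\mathbf{X}$ and $\mathcal{G}_{\mathbf{W}}$ the class of measurable functions of $\mathbf{W}$. The map $f\mapsto f\circ\pi$ embeds $\mathcal{G}_{\mathbf{X}}$ into $\mathcal{G}_{\mathbf{W}}$, and it preserves the risk, since
\[
\mathbb{E}[(Y-f(\mathbf{X}))^2]=\mathbb{E}[(Y-(f\circ\pi)(\mathbf{W}))^2].
\]
An infimum over a larger set is no larger, so $\mathcal{R}^{*}(\mathbf{W})\le\mathcal{R}^{*}(\mathbf{X})$, which is \eqref{eq:pde_vs_numeric}. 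Alternatively, assuming $\mathbb{E}[Y^2]<\infty$, one can identify $\mathcal{R}^*(\mathbf{Z})=\mathbb{E}[\mathrm{Var}(Y\mid\mathbf{Z})]$ and use $\sigma(\mathbf{X})\subseteq\sigma(\mathbf{W})$. The $L^2$ Pythagorean identity then gives
\[
\mathcal{R}^*(\mathbf{X})-\mathcal{R}^*(\mathbf{W})=\mathbb{E}\big[(\mathbb{E}[Y\mid\mathbf{W}]-\mathbb{E}[Y\mid\mathbf{X}])^2\big]\ge 0.
\]
This version also shows exactly when the inequality is strict.

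The only genuine subtlety is the status of $\mathcal{D}(\mathbf{X})$. If $\mathcal{D}$ is a deterministic measurable function of $\mathbf{X}$ (e.g., finite-difference derivative features), then $\sigma(\mathbf{W})=\sigma(\mathbf{X})$ and the inequality holds with equality. The theorem is then true but vacuous at the population level. I would state this explicitly and note that the advantage arises when $\mathcal{D}$ injects information not measurable w.r.t.\ $\mathbf{X}$, such as operator tokens encoding the PDE identity or external physical coefficients. In that case the gap is the displayed variance-of-conditional-means term.

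I expect the main obstacle to be a careful definition of $\mathcal{D}(\mathbf{X})$, making the joint distribution of $(Y,\mathbf{X},\mathcal{D})$ well-defined and measurable, rather than the inequality itself. I would handle this by simply assuming $\mathcal{D}$ is a random element on the same probability space, which is all the monotonicity argument needs.
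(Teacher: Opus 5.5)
Your proof is correct, and your main argument differs from the paper's. The paper works only through the identity $\mathcal{R}^{*}(\mathbf{Z})=\mathbb{E}[\operatorname{Var}(Y\mid\mathbf{Z})]$ and the law of total variance conditional on $\mathbf{X}$. It then drops the nonnegative term $\operatorname{Var}(\mathbb{E}[Y\mid\mathbf{X},\mathcal{D}(\mathbf{X})]\mid\mathbf{X})$. Your alternative $L^2$ argument is the same computation, since the tower property gives $\mathbb{E}[\operatorname{Var}(\mathbb{E}[Y\mid\mathbf{W}]\mid\mathbf{X})]=\mathbb{E}[(\mathbb{E}[Y\mid\mathbf{W}]-\mathbb{E}[Y\mid\mathbf{X}])^2]$.

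Your primary argument embeds predictors of $\mathbf{X}$ into predictors of $\mathbf{W}$ via $f\mapsto f\circ\pi$ and compares infima. It is more elementary and more general than the paper's. It uses no conditional expectations, and no square-integrability of $Y$, which the paper's conditional-variance identity tacitly assumes. It also holds verbatim for any loss, not only squared error. The paper's route instead yields the explicit gap and hence a strictness criterion, which your second argument also recovers.

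Your caveat about $\mathcal{D}$ is a genuine addition. If $\mathcal{D}$ is literally a measurable function of $\mathbf{X}$, then $\mathbb{E}[Y\mid\mathbf{X},\mathcal{D}(\mathbf{X})]$ is $\sigma(\mathbf{X})$-measurable. In that case the paper's strictness condition can never hold, and the inequality is always an equality. The paper's own example (identical instantaneous measurements, different local dynamics) shows that its intended reading matches yours. There, $\mathbf{X}$ is the pointwise state and the derivatives depend on the field in a neighborhood. So $\mathcal{D}(\mathbf{X})$ is really an additional random element that is not measurable with respect to $\sigma(\mathbf{X})$.
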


Theorem~\ref{thm:pde_structure} provides a theoretical distinction between the proposed physical-field modality and conventional numerical inputs. For example, two physical systems may exhibit identical instantaneous measurements while following different local dynamics. Thus, state-only numerical representations cannot distinguish such cases, whereas PDE-structured representations can encode their distinct evolution. This result therefore motivates organizing physical information according to its governing structure rather than treating it as an unordered collection of scalar features.

\section{Experiments}
\label{sec: experiments}
In this section, we present comprehensive experiments evaluating the proposed PhyMo framework under diverse datasets, including SKIPP'D~(\cite{nie2023skipp}), Folsom~(\cite{pedro2019comprehensive}), NREL~(\cite{hammond2026nrel}), MeteoNet~(\cite{larvor2020meteonet}) and Boreas~(\cite{burnett2023boreas}). The objective of these experiments is to address the following research questions: \textbf{Q1}: How does the proposed PhyMo framework perform compared to other AI4Physics baselines? \textbf{Q2}: To what extent do the proposed components of PhyMo contribute to its multimodal representation capabilities on physical-world datasets? \textbf{Q3}: How sensitive is the performance of the proposed model to variations in its hyperparameters? \textbf{Q4}: What insights can be obtained from the prediction visualization of the proposed model? Due to page limit, we put part of the illustration in experiments in Appendix~\ref{appen: pre-processing}.

\subsection{Experiment Setup}
\label{sec: setup}
\paragraph{Paradigms of Downstream Tasks.}
The proposed PhyMo is designed as a general multimodal representation-learning framework for AI4Physics rather than a task-specific forecasting model. The learned representations can be transferred to different downstream tasks through lightweight task-specific heads. However, due to the page limit, we have to instantiate regression as the primary downstream evaluation paradigm and consider representative tasks in the experiments, such as photovoltaic power and solar irradiance forecasting. Moreover, we conduct 5 experiments over varying random seeds and obtain the mean $R^2$ $\pm$ sample standard deviation, aiming for the statistically-robust evaluation.

\textbf{Datasets.} We evaluate our framework on five physical-world datasets containing images and complementary physical observations across diverse physical scenarios: (1) power prediction: SKIPP'D~(\cite{nie2023skipp}); (2) radiance prediction: Folsom~(\cite{pedro2019comprehensive}) and NREL~(\cite{hammond2026nrel}); (3) weather forecasting: MeteoNet~(\cite{larvor2020meteonet}); (4) autonomous-driving speed prediction: Boreas~(\cite{burnett2023boreas}).
Due to page limit, we have to put the detailed introduction about datasets in Appendix~\ref{appen: datasets}.

\textbf{Baselines.} As illustrated above, we choose regression as the paradigm of downstream tasks. Thus, we compare the performance of the proposed PhyMo against 10 baselines, including: (1) classical models: LASSO~(\cite{ranstam2018lasso}), RF~(\cite{pavlov2000random}) and TabM~(\cite{gorishniy2025tabm}); (2) time-series models: LSTM~(\cite{hochreiter1997long}), TSMixer~(\cite{chen2023tsmixer}) and xLSTM~(\cite{beck2024xlstm}); and (3) the SOTA multimodal models: BiMamba~(\cite{zhang2025multimodal}), AstroCLIP~(\cite{parker2024astroclip}), Maven~(\cite{zhang2024maven}) and AION-1~(\cite{parker2026aion}). The detailed introduction about the selected baselines can be found in Appendix~\ref{appen: baselines}.

\textbf{Evaluation Metrics.} We evaluate forecasting performance using three widely adopted regression metrics: root mean squared error (RMSE), coefficient of determination ($R^2$), and mean absolute error (MAE). RMSE measures the overall prediction deviation while assigning larger penalties to large errors, MAE quantifies the average absolute discrepancy between predictions and ground-truth values, and $R^2$ evaluates the proportion of target variance explained by the model, with higher values indicating better predictive performance. For readability, we report $R^2$ in percentage form, i.e., $100\times R^2$, in all experimental tables. The detailed illustration can be found in Appendix~\ref{appen: metrics}.

\textbf{Environments.} For reproducibility, we report the hardware and software configurations used in our experiments. All experiments were conducted on a server equipped with an Intel(R) Xeon(R) Gold 6240 CPU @ 2.60GHz and a high-performance GPU with 80GB memory.

\subsection{Performance Comparison}
\label{sec: performance_comparison}
Table~\ref{tab:sheet1-r2} shows that PhyMo achieves SOTA performance on SKIPP'D, Folsom, NREL, and MeteoNet, while ranking second on Boreas with performance close to the best-performing method. This consistent competitiveness across solar forecasting, meteorological prediction, and autonomous-driving scenarios demonstrates that PhyMo is not tailored to a specific task or data configuration. Instead, it provides a general framework for integrating visual observations with heterogeneous physical measurements.
The results of Table~\ref{tab:sheet1-r2} illustrate that the advantages of PhyMo are particularly evident on SKIPP'D, NREL and MeteoNet, which contain multiple interacting physical variables and spatially structured observations. This supports our central motivation illustrated in Section~\ref{sec: Motivation}: organizing heterogeneous measurements as the physical-field modality and grounding their representations through field reconstruction and PDE residual supervision can capture the governing dependencies that are difficult to learn. Due to the facts that prior work only treat physical measurements as numerical text and subsequently ignore the governing laws behind them, they can only achieve sub-optimal performance compared to the proposed PhyMo.

\begin{table*}[t]
\centering
\caption{Experimental comparison results across datasets.}
\label{tab:sheet1-r2}
\setlength{\tabcolsep}{1.4pt}
\renewcommand{\arraystretch}{1.05}
\newcommand{\rstd}[2]{%
  \makebox[5.2em][c]{$#1_{\pm #2}$}%
}
\newcommand{\brstd}[2]{%
  \makebox[5.2em][c]{\boldmath$#1_{\pm #2}$}%
}
\scriptsize
\resizebox{\textwidth}{!}{%
\begin{tabular}{@{}c*{11}{c}@{}}
\toprule
Dataset & LASSO & RF & LSTM & TabM & TSMixer & xLSTM & BiMamba & AstroCLIP & Maven & AION-1 & \textbf{PhyMo (ours)} \\
\midrule
SKIPP'D
& \rstd{82.61}{0.01}
& \rstd{83.44}{0.03}
& \rstd{83.65}{0.48}
& \rstd{83.82}{0.26}
& \rstd{83.69}{0.42}
& \rstd{83.65}{0.30}
& \rstd{83.13}{0.56}
& \rstd{82.62}{0.31}
& \rstd{75.86}{1.68}
& \rstd{77.94}{1.55}
& \textbf{\brstd{84.80}{0.14}} \\

Folsom
& \rstd{94.44}{0.03}
& \rstd{95.27}{0.05}
& \rstd{95.17}{0.03}
& \rstd{95.21}{0.02}
& \rstd{95.18}{0.02}
& \rstd{95.17}{0.02}
& \rstd{95.18}{0.03}
& \rstd{94.63}{0.03}
& \rstd{92.19}{0.09}
& \rstd{94.44}{0.03}
& \textbf{\brstd{95.35}{0.03}} \\

NREL
& \rstd{73.40}{0.01}
& \rstd{74.98}{0.01}
& \rstd{77.37}{0.03}
& \rstd{76.23}{0.02}
& \rstd{75.50}{0.04}
& \rstd{77.33}{0.08}
& \rstd{77.33}{0.06}
& \rstd{77.38}{0.03}
& \rstd{72.67}{0.50}
& \rstd{73.72}{0.21}
& \textbf{\brstd{79.31}{0.26}} \\

MeteoNet
& \rstd{96.38}{0.06}
& \rstd{93.38}{0.06}
& \rstd{97.65}{0.06}
& \rstd{97.61}{0.04}
& \rstd{97.64}{0.01}
& \textbf{\rstd{97.78}{0.01}}
& \rstd{97.76}{0.03}
& \rstd{97.45}{0.03}
& \rstd{91.23}{0.58}
& \rstd{93.81}{0.22}
& \brstd{98.36}{0.07} \\

Boreas
& \rstd{96.08}{2.01}
& \rstd{13.11}{5.23}
& \rstd{92.77}{5.98}
& \brstd{96.95}{2.82}
& \rstd{96.48}{1.85}
& \rstd{96.07}{2.23}
& \textbf{\rstd{96.24}{1.25}}
& \rstd{38.81}{11.55}
& \rstd{29.01}{2.71}
& \rstd{60.59}{6.99}
& \rstd{96.84}{1.28} \\
\bottomrule
\end{tabular}
}
\end{table*}

\subsection{Ablation Analysis}
\label{sec: ablation}
In order to answer \textbf{Q2}, we further conduct the ablation analysis to the proposed PhyMo, as shown in Table~\ref{tab:ablation-r2-skippd}. As shown in Table~\ref{tab:ablation-r2-skippd}, jointly incorporating the image and physical-field modalities yields an \(R^2\) of \(84.80\), outperforming both unimodal variants. Removing the physical-field modality decreases performance to \(82.90\), while removing the image modality results in \(83.45\). These consistent degradations demonstrate that the two modalities provide complementary information: images capture the observable appearance, whereas the physical-field modality represents the underlying complex physical dynamics of real world scenarios. The larger degradation caused by removing the physical field further indicates that its physics-grounded representation contributes information that cannot be fully recovered from visual observations alone.
The fusion ablation further shows that early fusion achieves \(84.80\), compared with \(83.94\) for late fusion. This result suggests that representation-level interaction is important for exploiting cross-modal dependencies. Early fusion (i.e., EF in Table~\ref{tab:ablation-r2-skippd}) allows visual patterns to be jointly interpreted with their corresponding physical states before prediction, thereby facilitating a unified representation of observable phenomena and governing dynamics. By contrast, late fusion (i.e., LF in Table~\ref{tab:ablation-r2-skippd}) combines modality-specific predictions only after the relevant representations have been learned independently, limiting fine-grained cross-modal interaction.
\begin{table}[t]
\centering
\caption{Ablation analysis of the proposed model on SKIPP'D dataset.}
\label{tab:ablation-r2-skippd}
\setlength{\tabcolsep}{4pt}
\renewcommand{\arraystretch}{1.18}
\newcommand{\rstd}[2]{%
  \makebox[5.2em][c]{$#1_{\pm #2}$}%
}
\newcommand{\brstd}[2]{%
  \makebox[5.2em][c]{\boldmath$#1_{\pm #2}$}%
}
\footnotesize
\begin{tabular}{@{}cccc@{}}
\toprule
Experiment & Setting & Configuration & \shortstack{$R^2$} \\
\midrule
\multirow{3}{*}{Modality}
& Both & Both modalities & \brstd{84.80}{0.14} \\
& w/o field & Field modality removed & \textbf{\rstd{82.90}{0.21}} \\
& w/o image & Image modality removed & \rstd{83.45}{0.14} \\
\midrule
\multirow{2}{*}{Fusion}
& EF & Early fusion & \textbf{\brstd{84.80}{0.14}} \\
& LF & Late fusion & \rstd{83.94}{0.13} \\
\bottomrule
\end{tabular}

\vspace{0.25em}
\end{table}

\subsection{Hyperparameter Sensitivity Analysis}
\label{sec: hyperparameter}
In order to answer \textbf{Q3}, we also conduct the hyperparameter sensitivity analysis to the proposed model, where the image span means the number of images fed into the model during the training and the reconstruction horizon means the horizons reconstructed by the model in Stage~1, as shown in Table~\ref{tab:hyperparam-sensitivity}. It shows that PhyMo is generally robust to the examined hyperparameters, while favoring moderate configurations. For the cross-modal embedding size, the performance remains within a narrow range across all settings, as well as the image span. For the reconstruction horizon, performance improves as the reconstruction set expands from K1 to K3 and remains comparable under K4 and K6, demonstrating that multi-horizon reconstruction provides richer temporal supervision but exhibits diminishing returns once the principal temporal scales are covered, which may be contributed to the fact that PV forecasting in SKIPP'D is more prone to the short-period histories instead of the long-term ones. Overall, PhyMo is not highly sensitive to small hyperparameter variations, although compact cross-modal representations, recent visual context, and a moderately diverse reconstruction horizon offer the most effective balance.


\begin{table}[t]
\centering
\caption{Sensitivity analysis of the proposed model on the SKIPP'D dataset.}
\label{tab:hyperparam-sensitivity}

\setlength{\tabcolsep}{3.3pt}
\renewcommand{\arraystretch}{1.18}
\footnotesize

\newcommand{\firstcell}[1]{\makebox[4.2em][c]{#1}}
\newcommand{\secondcell}[1]{\makebox[6em][c]{#1}}
\newcommand{\metriccell}[1]{\makebox[5.2em][c]{#1}}
\newcommand{\rstd}[2]{%
\makebox[5.2em][c]{$#1_{\pm #2}$}%
}
\newcommand{\brstd}[2]{%
\makebox[5.2em][c]{\boldmath$#1_{\pm #2}$}%
}

\begin{minipage}[t]{0.492\linewidth}
\vspace{0pt}
\centering
\textbf{(a) Cross-modal embedding size}\par\vspace{2pt}
\resizebox{\linewidth}{!}{%
\begin{tabular}{@{}cccccc@{}}
\toprule
\firstcell{EMB}
& \secondcell{Rep. width}
& \metriccell{H30}
& \metriccell{H60}
& \metriccell{H120}
& \metriccell{Mean} \\
\midrule
\firstcell{64}
& \secondcell{192}
& \rstd{87.23}{0.33}
& \rstd{83.20}{0.43}
& \rstd{77.13}{0.82}
& \rstd{84.71}{0.33} \\

\firstcell{96}
& \secondcell{288}
& \rstd{87.20}{0.17}
& \rstd{83.16}{0.36}
& \rstd{77.35}{0.24}
& \brstd{84.80}{0.14} \\

\firstcell{160}
& \secondcell{480}
& \rstd{86.88}{0.38}
& \rstd{83.05}{0.29}
& \rstd{76.63}{0.47}
& \rstd{84.49}{0.18} \\

\firstcell{192}
& \secondcell{576}
& \rstd{91.51}{0.15}
& \rstd{83.36}{0.55}
& \rstd{76.26}{0.54}
& \rstd{84.67}{0.26} \\

\firstcell{256}
& \secondcell{768}
& \rstd{87.13}{0.19}
& \rstd{83.27}{0.29}
& \rstd{76.66}{0.56}
& \rstd{84.68}{0.28} \\
\bottomrule
\end{tabular}%
}
\end{minipage}%
\hfill
\begin{minipage}[t]{0.492\linewidth}
\vspace{0pt}
\centering
\textbf{(b) Image span}\par\vspace{2pt}
\resizebox{\linewidth}{!}{%
\begin{tabular}{@{}cccccc@{}}
\toprule
\firstcell{Setting}
& \secondcell{Span (frames)}
& \metriccell{H30}
& \metriccell{H60}
& \metriccell{H120}
& \metriccell{Mean} \\
\midrule
\firstcell{C5}
& \secondcell{6}
& \rstd{87.17}{0.23}
& \rstd{83.47}{0.52}
& \rstd{77.14}{0.51}
& \brstd{84.80}{0.14} \\

\firstcell{C15}
& \secondcell{16}
& \rstd{87.26}{0.29}
& \rstd{83.13}{0.52}
& \rstd{76.98}{1.09}
& \rstd{84.72}{0.21} \\

\firstcell{C30}
& \secondcell{31}
& \rstd{87.22}{0.33}
& \rstd{82.98}{0.40}
& \rstd{77.08}{0.97}
& \rstd{84.72}{0.44} \\

\firstcell{C45}
& \secondcell{46}
& \rstd{87.03}{0.43}
& \rstd{82.53}{0.82}
& \rstd{76.16}{0.74}
& \rstd{84.29}{0.40} \\

\firstcell{C60}
& \secondcell{61}
& \rstd{86.85}{0.25}
& \rstd{82.58}{0.26}
& \rstd{76.73}{0.99}
& \rstd{84.34}{0.23} \\
\bottomrule
\end{tabular}%
}
\end{minipage}

\vspace{0.25em}

\begin{minipage}[t]{0.7\linewidth}
\vspace{0pt}
\centering
\textbf{(c) Reconstruction horizon}\par\vspace{2pt}
\resizebox{\linewidth}{!}{%
\begin{tabular}{@{}cccccc@{}}
\toprule
Setting
& Reconstruction set (min)
& H30
& H60
& H120
& Mean \\
\midrule
K1
& $\{5\}$
& \rstd{87.08}{0.39}
& \rstd{83.17}{0.51}
& \rstd{76.50}{0.59}
& \rstd{83.52}{0.37} \\

K2
& $\{5,15\}$
& \rstd{86.95}{0.30}
& \rstd{82.83}{0.84}
& \rstd{76.87}{0.82}
& \rstd{83.85}{0.51} \\

K3
& $\{5,15,30\}$
& \rstd{87.25}{0.25}
& \rstd{83.17}{0.38}
& \rstd{76.70}{0.77}
& \brstd{84.80}{0.14} \\

K4
& $\{5,15,20,30\}$
& \rstd{87.21}{0.19}
& \rstd{83.08}{0.52}
& \rstd{76.26}{0.64}
& \rstd{84.75}{0.26} \\

K6
& $\{5,10,15,20,25,30\}$
& \rstd{86.88}{0.50}
& \rstd{83.03}{0.64}
& \rstd{76.78}{0.70}
& \rstd{84.78}{0.47} \\
\bottomrule
\end{tabular}%
}
\end{minipage}

\vspace{0.35em}
\end{table}

\subsection{Prediction Visualization}
\label{sec: visualization}
In order to answer \textbf{Q4}, we visualize the prediction results of PhyMo on SKIPP'D, as shown in Fig.~\ref{fig:visualization}. The visualization reveals a clear dependence of forecasting behavior on weather-induced variability. Under sunny conditions, PhyMo closely tracks the ground-truth curves throughout the day, accurately recovering both the smooth diurnal evolution and the timing and magnitude of peak PV output, as reflected by consistently low RMSE and MAE values. Under cloudy conditions, the model still captures the overall generation envelope and major ramping events, including changes in the onset, peak, and decay of PV output. However, the errors increase as the ground truth becomes more volatile, particularly in periods characterized by abrupt, short-lived fluctuations.

\begin{figure*}[!h]
    \centering
    \includegraphics[width=\textwidth]{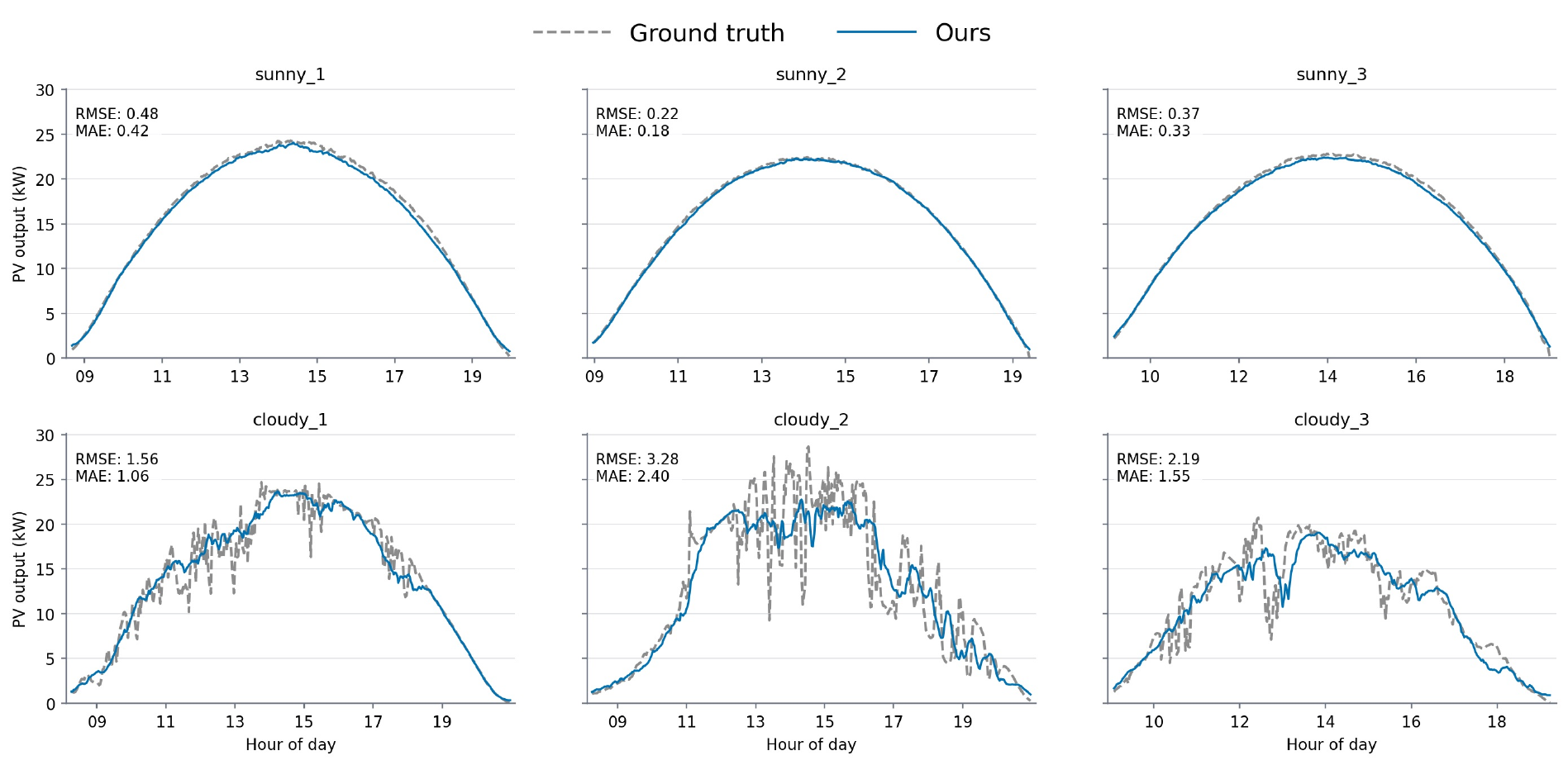}
    \caption{
    The prediction visualization of the proposed PhyMo on SKIPP'D. 
    }
    \label{fig:visualization}
\end{figure*}


\section{Conclusion}
\label{sec: conclusion}
This work introduced the \textbf{physical-field modality} and \textbf{PhyMo}, a physics-grounded multimodal framework for representing physical measurements according to their governing dynamics rather than as generic numerical or textual tokens. PhyMo organizes heterogeneous physical quantities through PDE-associated operators and adopts a three-stage learning procedure comprising physical-field encoder pretraining, cross-modal alignment, and downstream prediction.
Our theoretical analyses demonstrate the benefits of complementary multimodal information, physics-constrained learning, and PDE-structured inputs.
Across five datasets covering diverse physical environments, PhyMo achieves SOTA performance, compared to the strongest baseline on each dataset. These results establish physical fields as a principled modality for multimodal AI4Physics and provide a scalable framework for incorporating additional physical quantities, governing equations, and scientific domains.

\bibliography{references}
\bibliographystyle{iclr2027_conference}

\newpage
\appendix

\section{Detailed Theoretical Analysis}
\label{app:theory}

In this section, we provide detailed proofs for the theoretical results presented in Section~\ref{sec: theo_analysis}. We use squared prediction loss throughout the analysis, which is consistent with the MSE objective used for downstream prediction.

\subsection{Preliminaries}

Let $Y \in \mathbb{R}$ denote the target to be predicted. Let
$\mathbf{Z}_{\mathrm{img}}$ and $\mathbf{Z}_{\mathrm{phy}}$
denote the learned image and physical-field representations, respectively.
For an arbitrary representation $\mathbf{Z}$, define its Bayes-optimal prediction risk under squared loss as
\begin{equation}
\mathcal{R}^{*}(\mathbf{Z})
=
\inf_{f}
\mathbb{E}
\left[
\left(
Y-f(\mathbf{Z})
\right)^2
\right],
\label{eq:bayes_risk}
\end{equation}
where the infimum is taken over all measurable predictors $f$.

Under squared loss, the Bayes-optimal predictor is
\begin{equation}
f^{*}(\mathbf{Z})
=
\mathbb{E}[Y\mid\mathbf{Z}],
\label{eq:bayes_predictor}
\end{equation}
and therefore
\begin{equation}
\mathcal{R}^{*}(\mathbf{Z})
=
\mathbb{E}
\left[
\operatorname{Var}
\left(
Y\mid\mathbf{Z}
\right)
\right].
\label{eq:bayes_condvar}
\end{equation}

This identity provides the basis for Theorems~\ref{thm:multimodal} and~\ref{thm:pde_structure}.

\subsection{Proof of Multimodal Bayes-Risk Dominance}
\label{app:multimodal}

\begin{proof}[Proof of Theorem~\ref{thm:multimodal}]

Consider first the image representation $\mathbf{Z}_{\mathrm{img}}$. By the law of total variance,
\begin{align}
\operatorname{Var}
\left(
Y\mid\mathbf{Z}_{\mathrm{img}}
\right)
&=
\mathbb{E}
\left[
\operatorname{Var}
\left(
Y
\mid
\mathbf{Z}_{\mathrm{img}},
\mathbf{Z}_{\mathrm{phy}}
\right)
\middle|
\mathbf{Z}_{\mathrm{img}}
\right]
\nonumber\\
&\quad+
\operatorname{Var}
\left(
\mathbb{E}
\left[
Y
\mid
\mathbf{Z}_{\mathrm{img}},
\mathbf{Z}_{\mathrm{phy}}
\right]
\middle|
\mathbf{Z}_{\mathrm{img}}
\right).
\label{eq:variance_decomp_img}
\end{align}

The second term on the right-hand side is non-negative. Hence,
\begin{equation}
\operatorname{Var}
\left(
Y\mid\mathbf{Z}_{\mathrm{img}}
\right)
\geq
\mathbb{E}
\left[
\operatorname{Var}
\left(
Y
\mid
\mathbf{Z}_{\mathrm{img}},
\mathbf{Z}_{\mathrm{phy}}
\right)
\middle|
\mathbf{Z}_{\mathrm{img}}
\right].
\end{equation}

Taking expectation over $\mathbf{Z}_{\mathrm{img}}$ gives
\begin{equation}
\mathbb{E}
\left[
\operatorname{Var}
\left(
Y\mid\mathbf{Z}_{\mathrm{img}}
\right)
\right]
\geq
\mathbb{E}
\left[
\operatorname{Var}
\left(
Y
\mid
\mathbf{Z}_{\mathrm{img}},
\mathbf{Z}_{\mathrm{phy}}
\right)
\right].
\end{equation}

Using Eq.~\eqref{eq:bayes_condvar}, we obtain
\begin{equation}
\mathcal{R}^{*}
\left(
\mathbf{Z}_{\mathrm{img}},
\mathbf{Z}_{\mathrm{phy}}
\right)
\leq
\mathcal{R}^{*}
\left(
\mathbf{Z}_{\mathrm{img}}
\right).
\label{eq:risk_img}
\end{equation}

By symmetry, conditioning first on $\mathbf{Z}_{\mathrm{phy}}$ yields
\begin{equation}
\mathcal{R}^{*}
\left(
\mathbf{Z}_{\mathrm{img}},
\mathbf{Z}_{\mathrm{phy}}
\right)
\leq
\mathcal{R}^{*}
\left(
\mathbf{Z}_{\mathrm{phy}}
\right).
\label{eq:risk_phy}
\end{equation}

Combining Eqs.~\eqref{eq:risk_img} and~\eqref{eq:risk_phy} gives
\begin{equation}
\boxed{
\mathcal{R}^{*}
\left(
\mathbf{Z}_{\mathrm{img}},
\mathbf{Z}_{\mathrm{phy}}
\right)
\leq
\min
\left\{
\mathcal{R}^{*}
\left(
\mathbf{Z}_{\mathrm{img}}
\right),
\mathcal{R}^{*}
\left(
\mathbf{Z}_{\mathrm{phy}}
\right)
\right\}.
}
\end{equation}
\end{proof}

\subsection{Proof of Reconstruction-Risk Advantage of Physics-Constrained Representations}
\label{app:physics_reconstruction}

We provide the detailed proof of
Theorem~\ref{thm:physics_reconstruction} and establish its connection to the Stage~1 training objective.

\subsubsection{Physical Admissible Set}

Let
\begin{equation}
\mathbf{X}_{\mathrm{phy}}^{*}
=
\left\{
\mathbf{X}_{\mathrm{phy}}^{*(1)},
\ldots,
\mathbf{X}_{\mathrm{phy}}^{*(M)}
\right\}
\end{equation}
denote the underlying physical fields.
For the $m$-th field, let $\mathcal{F}_m$ denote its governing PDE.

We define the joint physically admissible set as
\begin{equation}
\mathcal{S}
=
\left\{
\mathbf{X}:
\mathcal{F}_m
\left(
\mathbf{X}^{(m)}
\right)=0,
\quad
m=1,\ldots,M
\right\}.
\label{eq:admissible_set_app}
\end{equation}

We assume that the governing equations are correctly specified, such that
\begin{equation}
\mathbf{X}_{\mathrm{phy}}^{*}
\in
\mathcal{S}.
\label{eq:true_in_set}
\end{equation}

We further assume that $\mathcal{S}$ is nonempty, closed, and convex, so that the Euclidean projection onto $\mathcal{S}$ is uniquely defined.

Let
\begin{equation}
\tilde{\mathbf{X}}_{\mathrm{phy}}
=
D_{\mathrm{phy}}(\mathbf{Z})
\end{equation}
denote an arbitrary unconstrained reconstruction obtained from a latent representation $\mathbf{Z}$.
The corresponding physics-constrained reconstruction is defined by
\begin{equation}
\hat{\mathbf{X}}_{\mathrm{phy}}^{*}
=
\Pi_{\mathcal{S}}
\left(
\tilde{\mathbf{X}}_{\mathrm{phy}}
\right)
=
\arg\min_{\mathbf{U}\in\mathcal{S}}
\left\|
\mathbf{U}
-
\tilde{\mathbf{X}}_{\mathrm{phy}}
\right\|_2^2.
\label{eq:projection_def}
\end{equation}

\subsubsection{Projection-Based Reconstruction Bound}

\begin{proof}[Proof of Theorem~\ref{thm:physics_reconstruction}]

For any nonempty closed convex set $\mathcal{S}$, the metric projection
$\Pi_{\mathcal{S}}(\mathbf{x})$
satisfies
\begin{equation}
\left\langle
\mathbf{x}
-
\Pi_{\mathcal{S}}(\mathbf{x}),
\mathbf{s}
-
\Pi_{\mathcal{S}}(\mathbf{x})
\right\rangle
\leq 0,
\qquad
\forall \mathbf{s}\in\mathcal{S}.
\label{eq:projection_property}
\end{equation}

Taking
\begin{equation}
\mathbf{x}
=
\tilde{\mathbf{X}}_{\mathrm{phy}},
\qquad
\mathbf{s}
=
\mathbf{X}_{\mathrm{phy}}^{*},
\end{equation}
and using
$\mathbf{X}_{\mathrm{phy}}^{*}\in\mathcal{S}$,
we obtain
\begin{equation}
\left\langle
\tilde{\mathbf{X}}_{\mathrm{phy}}
-
\hat{\mathbf{X}}_{\mathrm{phy}}^{*},
\mathbf{X}_{\mathrm{phy}}^{*}
-
\hat{\mathbf{X}}_{\mathrm{phy}}^{*}
\right\rangle
\leq 0.
\label{eq:projection_true}
\end{equation}

Now decompose the unconstrained reconstruction error as
\begin{equation}
\tilde{\mathbf{X}}_{\mathrm{phy}}
-
\mathbf{X}_{\mathrm{phy}}^{*}
=
\left(
\tilde{\mathbf{X}}_{\mathrm{phy}}
-
\hat{\mathbf{X}}_{\mathrm{phy}}^{*}
\right)
+
\left(
\hat{\mathbf{X}}_{\mathrm{phy}}^{*}
-
\mathbf{X}_{\mathrm{phy}}^{*}
\right).
\end{equation}

Taking the squared Euclidean norm gives
\begin{align}
&
\left\|
\tilde{\mathbf{X}}_{\mathrm{phy}}
-
\mathbf{X}_{\mathrm{phy}}^{*}
\right\|_2^2
\nonumber\\
&=
\left\|
\tilde{\mathbf{X}}_{\mathrm{phy}}
-
\hat{\mathbf{X}}_{\mathrm{phy}}^{*}
\right\|_2^2
+
\left\|
\hat{\mathbf{X}}_{\mathrm{phy}}^{*}
-
\mathbf{X}_{\mathrm{phy}}^{*}
\right\|_2^2
\nonumber\\
&\quad+
2
\left\langle
\tilde{\mathbf{X}}_{\mathrm{phy}}
-
\hat{\mathbf{X}}_{\mathrm{phy}}^{*},
\hat{\mathbf{X}}_{\mathrm{phy}}^{*}
-
\mathbf{X}_{\mathrm{phy}}^{*}
\right\rangle.
\label{eq:error_expansion}
\end{align}

From Eq.~\eqref{eq:projection_true},
\begin{equation}
\left\langle
\tilde{\mathbf{X}}_{\mathrm{phy}}
-
\hat{\mathbf{X}}_{\mathrm{phy}}^{*},
\hat{\mathbf{X}}_{\mathrm{phy}}^{*}
-
\mathbf{X}_{\mathrm{phy}}^{*}
\right\rangle
\geq 0.
\end{equation}

Therefore,
\begin{align}
\left\|
\tilde{\mathbf{X}}_{\mathrm{phy}}
-
\mathbf{X}_{\mathrm{phy}}^{*}
\right\|_2^2
&\geq
\left\|
\tilde{\mathbf{X}}_{\mathrm{phy}}
-
\hat{\mathbf{X}}_{\mathrm{phy}}^{*}
\right\|_2^2
\nonumber\\
&\quad+
\left\|
\hat{\mathbf{X}}_{\mathrm{phy}}^{*}
-
\mathbf{X}_{\mathrm{phy}}^{*}
\right\|_2^2.
\label{eq:pythagorean}
\end{align}

Since the first term on the right-hand side is non-negative,
\begin{equation}
\boxed{
\left\|
\hat{\mathbf{X}}_{\mathrm{phy}}^{*}
-
\mathbf{X}_{\mathrm{phy}}^{*}
\right\|_2^2
\leq
\left\|
\tilde{\mathbf{X}}_{\mathrm{phy}}
-
\mathbf{X}_{\mathrm{phy}}^{*}
\right\|_2^2.
}
\label{eq:projection_final}
\end{equation}

Let the reconstruction loss associated with a representation
$\mathbf{Z}$ be
\begin{equation}
\mathcal{L}_{\mathrm{rec}}(\mathbf{Z})
=
\left\|
D_{\mathrm{phy}}(\mathbf{Z})
-
\mathbf{X}_{\mathrm{phy}}^{*}
\right\|_2^2.
\label{eq:rec_loss_true}
\end{equation}

If $\mathbf{Z}_{\mathrm{phy}}^{*}$ denotes a representation whose decoded field equals
$\hat{\mathbf{X}}_{\mathrm{phy}}^{*}$,
Eq.~\eqref{eq:projection_final} directly implies
\begin{equation}
\boxed{
\mathcal{L}_{\mathrm{rec}}
\left(
\mathbf{Z}_{\mathrm{phy}}^{*}
\right)
\leq
\mathcal{L}_{\mathrm{rec}}
\left(
\mathbf{Z}
\right).
}
\label{eq:latent_rec_dominance}
\end{equation}

Since Eq.~\eqref{eq:latent_rec_dominance} holds pointwise, taking expectation over the data distribution preserves the inequality:
\begin{equation}
\boxed{
\mathbb{E}
\left[
\mathcal{L}_{\mathrm{rec}}
\left(
\mathbf{Z}_{\mathrm{phy}}^{*}
\right)
\right]
\leq
\mathbb{E}
\left[
\mathcal{L}_{\mathrm{rec}}
\left(
\mathbf{Z}
\right)
\right].
}
\label{eq:expected_rec_final}
\end{equation}
\end{proof}

\subsection{Proof of the Advantage of PDE-Structured Physical Representations}
\label{app:pde_structure}

We finally compare the proposed physical-field organization with a conventional representation that treats physical measurements only as numerical values.

Let
\begin{equation}
\mathbf{X}
=
\left[
R,u,v,T,q,\ldots
\right]
\end{equation}
denote the physical quantities observed at a given state.

Let
\begin{equation}
\mathcal{D}(\mathbf{X})
=
\left[
\partial_t \mathbf{X},
\nabla \mathbf{X},
\nabla^2\mathbf{X},
\partial_{xy}\mathbf{X},
\ldots
\right]
\end{equation}
denote the collection of the PDE operators used by the proposed physical-field modality.
A conventional numerical representation uses only $\mathbf{X}$, whereas the PDE-structured representation uses
\begin{equation}
\mathbf{X}_{\mathrm{PDE}}
=
\left(
\mathbf{X},
\mathcal{D}(\mathbf{X})
\right).
\end{equation}

\begin{proof}[Proof of Theorem~\ref{thm:pde_structure}]

Using Eq.~\eqref{eq:bayes_condvar}, the minimum prediction risk based only on numerical physical values is
\begin{equation}
\mathcal{R}^{*}(\mathbf{X})
=
\mathbb{E}
\left[
\operatorname{Var}
(Y\mid\mathbf{X})
\right].
\end{equation}

The Bayes risk using the PDE-structured physical information is
\begin{equation}
\mathcal{R}^{*}
\left(
\mathbf{X},
\mathcal{D}(\mathbf{X})
\right)
=
\mathbb{E}
\left[
\operatorname{Var}
\left(
Y
\mid
\mathbf{X},
\mathcal{D}(\mathbf{X})
\right)
\right].
\end{equation}

Applying the law of total variance conditional on $\mathbf{X}$ gives
\begin{align}
\operatorname{Var}(Y\mid\mathbf{X})
&=
\mathbb{E}
\left[
\operatorname{Var}
\left(
Y
\mid
\mathbf{X},
\mathcal{D}(\mathbf{X})
\right)
\middle|
\mathbf{X}
\right]
\nonumber\\
&\quad+
\operatorname{Var}
\left(
\mathbb{E}
\left[
Y
\mid
\mathbf{X},
\mathcal{D}(\mathbf{X})
\right]
\middle|
\mathbf{X}
\right).
\label{eq:pde_var_decomp}
\end{align}

Since the second term is non-negative,
we have,
\begin{equation}
\boxed{
\mathcal{R}^{*}
\left(
\mathbf{X},
\mathcal{D}(\mathbf{X})
\right)
\leq
\mathcal{R}^{*}(\mathbf{X}).
}
\end{equation}

Moreover, subtracting the two risks gives
\begin{align}
&
\mathcal{R}^{*}(\mathbf{X})
-
\mathcal{R}^{*}
\left(
\mathbf{X},
\mathcal{D}(\mathbf{X})
\right)
\nonumber\\
&=
\mathbb{E}
\left[
\operatorname{Var}
\left(
\mathbb{E}
\left[
Y
\mid
\mathbf{X},
\mathcal{D}(\mathbf{X})
\right]
\middle|
\mathbf{X}
\right)
\right].
\label{eq:pde_gain}
\end{align}

Hence, the inequality is strict whenever
\begin{equation}
\operatorname{Var}
\left(
\mathbb{E}
[
Y
\mid
\mathbf{X},
\mathcal{D}(\mathbf{X})
]
\mid
\mathbf{X}
\right)
>0
\end{equation}
with positive probability.

This condition precisely means that the differential structure contains predictive information that cannot be recovered from the instantaneous numerical state alone.
\end{proof}

\section{The Detailed Illustration about the Experiment Setup}
\label{appen: pre-processing}

\subsection{Dataset Details}
\label{appen: datasets}

We conduct experiments on five real-world datasets covering different geographic locations, atmospheric conditions, and forecasting targets. These datasets provide complementary combinations of sky imagery, irradiance measurements, photovoltaic generation, and meteorological observations, allowing us to evaluate the proposed multimodal framework under diverse physical environments.

\paragraph{SKIPP'D.}
The SKIPP'D dataset~(\cite{nie2023skipp}) was collected at Stanford University and was designed specifically for short-term solar forecasting from ground-based sky imagery. It contains three years of observations from 2017 to 2019 and provides temporally synchronized sky images and photovoltaic (PV) power generation measurements. The benchmark version includes quality-controlled sky images downsampled to $64\times64$ pixels and concurrent PV generation records at a one-minute interval. In addition, the dataset releases the corresponding raw data, including high-resolution $2048\times2048$ sky images, high-frequency sky video recorded at 20 frames per second, and PV generation histories at one-minute resolution. This combination provides both rich visual observations of cloud evolution and direct measurements of the downstream energy-generation process, making SKIPP'D particularly suitable for evaluating multimodal PV power forecasting models. In our setting, historical sky images constitute the visual modality, while the synchronized physical measurements are used to construct the physical-field representation. The forecasting target is future PV power generation. 

\paragraph{Folsom.}
The Folsom dataset~(\cite{pedro2019comprehensive}) is a comprehensive solar forecasting benchmark collected in California over the period 2014--2016. It provides quality-controlled irradiance observations at one-minute resolution, including global horizontal irradiance (GHI), direct normal irradiance (DNI), and diffuse horizontal irradiance (DHI). The dataset additionally contains daytime ground-based sky images sampled at one-minute intervals, local meteorological observations, GOES-15 satellite information, and Numerical Weather Prediction (NWP) forecasts from the North American Mesoscale model. All temporal records are provided in UTC, while irradiance and meteorological quantities are stored in SI units. Compared with SKIPP'D, which directly provides PV generation as the prediction target, Folsom primarily supports solar irradiance forecasting and therefore offers a complementary evaluation setting in which the target is an atmospheric radiative quantity rather than electrical power output. Its synchronized imagery and heterogeneous meteorological observations make it particularly useful for evaluating whether physical-field representations improve forecasting beyond purely image-based models.

\paragraph{NREL Sky Imagery.}
The NREL Sky Imagery dataset~(\cite{hammond2026nrel}) is collected at the Solar Radiation Research Laboratory (SRRL) of the National Renewable Energy Laboratory in Golden, Colorado, USA. The dataset uses an EKO ASI-16 all-sky imager operating at its native one-minute acquisition frequency and provides high-resolution $1920\times1920$ JPEG sky images. Data collection began in October 2025 and the dataset is continuously updated, providing substantially denser temporal sampling than the standard ten-minute SRRL public image gallery. Each daytime image is temporally associated with the nearest one-minute observation from the SRRL Baseline Measurement System, with a matching tolerance of approximately five minutes.

The accompanying physical measurements provide a particularly rich set of atmospheric variables. They include GHI, DNI, DHI, secondary irradiance measurements, ambient air temperature, relative humidity, average and peak wind speed, wind direction, atmospheric pressure, precipitation, solar zenith angle, solar azimuth angle, and a derived clear-sky index. These variables jointly describe radiative, thermodynamic, and dynamical aspects of the local atmosphere and therefore provide a natural basis for constructing the physical-field modality. The dataset is intended primarily for intra-hour solar irradiance forecasting, cloud-motion analysis, and cross-site transfer learning. In our experiments, historical all-sky images provide the visual stream, while synchronized irradiance and meteorological quantities constitute the physical observations used by the physical-field encoder.

\paragraph{MeteoNet.}
The MeteoNet dataset~(\cite{larvor2020meteonet}) is an open meteorological dataset released by Météo-France. It covers the period from 2016 to 2018 over two $550\,\mathrm{km}\times550\,\mathrm{km}$ regions in northwestern and southeastern France. The dataset integrates multiple heterogeneous data sources, including ground-station observations, precipitation radar maps, satellite imagery, numerical weather prediction outputs, and static land--sea and terrain masks. Ground observations are sampled every six minutes, radar products every five minutes at a spatial resolution of $0.01^\circ$, and satellite cloud-type maps every 15 minutes at a resolution of $0.03^\circ$.

The physical measurements include temperature, dew-point temperature, humidity, atmospheric pressure, precipitation, wind speed, and wind direction. MeteoNet further provides two- and three-dimensional forecasts from the AROME and ARPEGE models, containing near-surface and upper-atmospheric variables such as temperature, humidity, horizontal wind components, vertical velocity, pressure, and geopotential. These temporally aligned observations, remote-sensing products, and forecast fields make MeteoNet suitable for multimodal weather forecasting, precipitation nowcasting, and learning representations that jointly capture radiative, thermodynamic, moisture, and dynamical atmospheric processes.

\paragraph{Boreas.}
The Boreas dataset~(\cite{burnett2023boreas}) is a multi-season autonomous-driving dataset collected along repeated routes near Toronto, Canada. It contains 44 driving sequences covering more than 350 km between November 2020 and November 2021, encompassing substantial seasonal and weather variations, including sunshine, rain, accumulated snow, and active snowfall. Its sensing platform combines a 128-beam Velodyne Alpha-Prime lidar, a $2448\times2048$ FLIR Blackfly S monocular camera, a $360^\circ$ Navtech scanning radar, and an Applanix POS LV GNSS--INS system. The lidar and camera operate at 10 Hz, while the radar operates at 4 Hz.

The dataset provides synchronized sensor streams, intrinsic and extrinsic calibration parameters, and centimetre-level post-processed ground-truth poses. Repeated observations of the same environment under different seasons and weather conditions enable systematic evaluation of multimodal representation learning and sensor robustness under distribution shifts. Boreas primarily supports all-weather odometry, long-term metric localization, sensor fusion, and 3D object detection. Its complementary camera, lidar, radar, and navigation measurements are particularly suitable for studying how different sensing modalities respond to adverse environmental conditions.



\subsection{Baseline Details}
\label{appen: baselines}
\paragraph{LASSO.}
LASSO~(\cite{ranstam2018lasso}) addresses overfitting and feature redundancy in regression by augmenting the least-squares objective with an $\ell_1$ penalty. It directly maps the input variables to the target while shrinking weakly informative coefficients toward zero, thereby performing regression and implicit feature selection simultaneously. We include LASSO as a classical structured-data baseline because it provides an interpretable linear reference for assessing whether the gains of more complex models arise from nonlinear temporal and multimodal representation learning.

\paragraph{Random Forest.}
Random Forest (RF)~(\cite{pavlov2000random}) models nonlinear relationships through an ensemble of decision trees. Each tree is trained on a bootstrapped subset of the training data, considers a randomly sampled subset of features at each split, and produces an independent prediction; the final regression output is obtained by averaging the predictions of all trees. RF is selected as a classical structured-data baseline because it captures nonlinear feature interactions without learned temporal representations, providing a strong non-neural reference for evaluating the benefits of sequence modeling and multimodal fusion.

\paragraph{TabM.}
TabM~(\cite{gorishniy2025tabm}) addresses supervised learning on tabular data through a parameter-efficient ensemble of MLPs. It uses BatchEnsemble-style parameter sharing to generate multiple predictions for each sample within a single model and aggregates these predictions to improve optimization, robustness, and generalization. We select TabM as a representative neural baseline for structured inputs because it offers strong regression performance without explicitly modeling temporal order, enabling us to distinguish the contribution of temporal and multimodal inductive biases from that of a powerful numerical feature learner.

\paragraph{LSTM.}
Long Short-Term Memory (LSTM)~(\cite{hochreiter1997long}) was introduced to mitigate the vanishing-gradient problem that prevents conventional recurrent networks from learning long-range dependencies. At each time step, its input, forget, and output gates regulate information written to, retained in, and read from a persistent memory cell; the resulting hidden representation is subsequently mapped to the regression target. As regression tasks are inherently sequential, we include LSTM as the canonical recurrent baseline for evaluating whether more recent architectures provide advantages beyond conventional gated temporal modeling.

\paragraph{TSMixer.}
TSMixer~(\cite{chen2023tsmixer}) is an all-MLP architecture designed for multivariate time-series forecasting. Its stacked mixer blocks alternately apply MLPs along the temporal dimension to capture dependencies across time steps and along the feature dimension to model interactions among variables, after which a projection head produces the forecasts. We include TSMixer as a lightweight temporal baseline because it directly targets regression forecasting while avoiding recurrence and self-attention, allowing us to evaluate whether simple temporal and cross-variable mixing is sufficient for the considered tasks.

\paragraph{xLSTM.}
xLSTM~(\cite{beck2024xlstm}) modernizes the original LSTM through two complementary memory mechanisms: sLSTM introduces exponential gating and enhanced memory mixing, whereas mLSTM replaces scalar memory with a matrix-valued memory and covariance-style updates. These blocks are stacked to encode the input sequence before a task-specific head generates the prediction. We select xLSTM as a recent recurrent baseline because it tests whether improved long-range memory and scalable sequence modeling can better capture the temporal dynamics underlying regression-based prediction.

\paragraph{BiMamba.}
Sky-BiMamba~(\cite{zhang2025multimodal}) addresses ultra-short-term PV power forecasting by jointly modeling historical sky images and PV measurements. It first extracts cloud appearance and motion cues from sky-image sequences using frame differencing and lightweight visual encoders, then employs bidirectional Mamba modules and temporally aware cross-modal attention to integrate visual features with PV sequences; a dynamic gating mechanism further reduces the influence of low-quality images before prediction. We include it as a recent domain-specific multimodal baseline because its task and input modalities closely match our setting, making it a particularly direct comparison for evaluating PhyMo.

\paragraph{AstroCLIP.}
AstroCLIP~(\cite{parker2024astroclip}) addresses the joint representation of galaxy images and optical spectra, which provide complementary observations of the same physical system. It first pretrains transformer-based image and spectrum encoders using modality-specific self-supervised objectives and subsequently aligns paired representations in a shared latent space through contrastive learning. AstroCLIP is selected as a recent scientific multimodal baseline because its separate encoding followed by cross-modal alignment closely matches the general paradigm considered in our work, while providing a reference for comparison with PhyMo's explicitly physics-aware representation.

\paragraph{Maven.}
Maven~(\cite{zhang2024maven}) addresses the substantial imbalance between abundant photometric observations and scarce but information-rich spectra in supernova science. It encodes photometric and spectroscopic observations separately, contrastively aligns their representations using approximately 0.5 million simulated supernovae, and then adapts the pretrained model to real observations for downstream classification and redshift regression. We include Maven as a recent multimodal scientific model because it demonstrates how heterogeneous observations can be unified through contrastive pretraining and transferred to regression tasks, providing a relevant comparison with our multimodal alignment strategy.

\paragraph{AION-1.}
AION-1~(\cite{parker2026aion}) addresses the fragmentation of astronomical learning systems across heterogeneous instruments, surveys, and data types. It first transforms images, spectra, and scalar measurements using modality-specific tokenizers and then applies transformer-based masked modeling to their combined token sequences, producing a unified encoder transferable to property estimation, retrieval, segmentation, and spectral reconstruction. We select AION-1 as a recent omnimodal scientific foundation model because it represents a general approach to integrating heterogeneous scientific observations and therefore provides a strong comparison for assessing the effectiveness of PhyMo's dedicated physical-field organization and learning objective.

\subsection{Metrics}
\label{appen: metrics}
We evaluate model performance using root mean squared error (RMSE), coefficient of determination ($R^2$), and mean absolute error (MAE). Let
$\{y_i\}_{i=1}^{N}$ denote the ground-truth targets,
$\{\hat{y}_i\}_{i=1}^{N}$ denote the corresponding predictions,
and
\begin{equation}
    \bar{y}
    =
    \frac{1}{N}
    \sum_{i=1}^{N} y_i
\end{equation}
denote the mean of the ground-truth targets.

\paragraph{Root Mean Squared Error (RMSE).}
RMSE measures the square root of the average squared prediction error:
\begin{equation}
    \mathrm{RMSE}
    =
    \sqrt{
        \frac{1}{N}
        \sum_{i=1}^{N}
        \left(
            \hat{y}_i-y_i
        \right)^2
    }.
\end{equation}
Because prediction errors are squared before aggregation, RMSE places relatively larger penalties on samples with large deviations. It is therefore particularly sensitive to large forecasting errors. A lower RMSE indicates better predictive accuracy, with an ideal value of zero.

\paragraph{Mean Absolute Error (MAE).}
MAE computes the average absolute difference between predictions and ground-truth values:
\begin{equation}
    \mathrm{MAE}
    =
    \frac{1}{N}
    \sum_{i=1}^{N}
    \left|
        \hat{y}_i-y_i
    \right|.
\end{equation}
In contrast to RMSE, MAE penalizes prediction errors linearly and is therefore less sensitive to a small number of large deviations. It provides a direct measure of the typical absolute forecasting error in the same unit as the prediction target. Lower MAE values indicate better performance.

\paragraph{Coefficient of Determination ($R^2$).}
The coefficient of determination evaluates how much of the variation in the target variable is explained by the model:
\begin{equation}
    R^2
    =
    1
    -
    \frac{
        \sum_{i=1}^{N}
        \left(
            y_i-\hat{y}_i
        \right)^2
    }{
        \sum_{i=1}^{N}
        \left(
            y_i-\bar{y}
        \right)^2
    }.
\end{equation}
The numerator corresponds to the residual sum of squares, while the denominator represents the total variation of the ground-truth targets around their mean. An $R^2$ value of $1$ indicates perfect predictions, whereas $R^2=0$ indicates performance equivalent to predicting the target mean for every sample. Negative $R^2$ values are also possible and indicate that the model performs worse than this mean-value baseline. Unlike RMSE and MAE, larger $R^2$ values correspond to better predictive performance.

\paragraph{Complementarity of the Metrics.}
The three metrics characterize prediction quality from complementary perspectives. RMSE emphasizes large deviations, MAE reflects the average magnitude of forecasting errors with greater robustness to extreme errors, and $R^2$ measures the explanatory capability of the model relative to the intrinsic variability of the target. We therefore report all three metrics to provide a more comprehensive evaluation of downstream forecasting performance.

\section{Limitations}

Despite its effectiveness, PhyMo has several limitations that motivate future work. First, owing to space constraints, our downstream evaluations primarily focus on regression-based forecasting; we will extend PhyMo to broader tasks, including classification, reconstruction, and physical-state estimation, to further assess its applicability and generalizability in real-world physical systems. Second, although our experiments cover diverse datasets and physical environments, the available physical variables and their spatiotemporal resolutions remain dataset-dependent. We will therefore evaluate PhyMo under incomplete, noisy, and asynchronously sampled observations and extend it to additional physical domains. Finally, the three-stage training procedure and PDE residual computation introduce additional computational overhead compared with conventional data-driven models. Future work will improve efficiency through lightweight physical-field encoders, shared-stage optimization, and more efficient PDE-operator computation.

\section*{AI Use Statement}
In this work, we used generative AI tool to assist with translation, implement methods with codes and provide feedback on experiments.
We have not used generative AI tools for generate synthetic data sets, help develop conceptual frameworks, formulate mathematical claims, provide critical ingredients for proving mathematical claims, assist in the writing of proofs, propose hypotheses, clean the dataset, support qualitative and thematic data analysis and interpret results. We have reviewed all AI-assisted work. For example, LLM-generated code was verified and tested for correctness
by 2 authors. We take responsibility for the final content of this work, including text, claims or artifacts produced with the aid of generative AI.


\end{document}

%% file: math_commands.tex
\usepackage{amsmath,amsfonts,bm}

\def\eqref#1{equation~\ref{#1}}

\def\1{\bm{1}}

\DeclareMathAlphabet{\mathsfit}{\encodingdefault}{\sfdefault}{m}{sl}
\SetMathAlphabet{\mathsfit}{bold}{\encodingdefault}{\sfdefault}{bx}{n}

